\title[S3F]{State Supervised Steering Function for Sampling-based Kinodynamic Planning}
\author{Pranav Atreya}
\affiliation{
  \institution{University of Texas at Austin}
  \city{Austin}
  \state{TX}
  \country{United States}}
\email{pranavatreya@utexas.edu}
\author{Joydeep Biswas}
\affiliation{
  \institution{University of Texas at Austin}
  \city{Austin}
  \state{TX}
  \country{United States}}
\email{joydeepb@cs.utexas.edu}
\begin{abstract}
  Sampling-based motion planners such as RRT* and BIT*, when applied to kinodynamic motion planning, rely on steering 
  functions to generate time-optimal solutions connecting sampled states. Implementing exact steering functions requires 
  either analytical solutions to the time-optimal control problem, or nonlinear programming (NLP) solvers to solve the 
  boundary value problem given the system's kinodynamic equations. Unfortunately, analytical solutions are unavailable 
  for many real-world domains, and NLP solvers are prohibitively computationally expensive, hence fast and optimal 
  kinodynamic motion planning remains an open problem. We provide a solution to this problem by introducing State 
  Supervised Steering Function (S3F), a novel approach to learn time-optimal steering functions. S3F is able to 
  produce near-optimal solutions to the steering function orders of magnitude faster than its NLP counterpart. 
  Experiments conducted on three challenging robot domains show that RRT* using S3F significantly outperforms 
  state-of-the-art planning approaches on both solution cost and runtime. We further provide a proof of probabilistic 
  completeness of RRT* modified to use S3F.
\end{abstract}
\keywords{Kinodynamic Motion Planning; Learning Steering Functions; \\ Sampling-based Planning}
\newcommand{\BibTeX}{\rm B\kern-.05em{\sc i\kern-.025em b}\kern-.08em\TeX}
\newcommand\inputpgf[2]{{
\let\pgfimageWithoutPath\pgfimage
\renewcommand{\pgfimage}[2][]{\pgfimageWithoutPath[##1]{#1/##2}}
\input{#1/#2}
}}
\DeclareMathOperator*{\argmin}{arg\,min}
\begin{document}


\pagestyle{fancy}
\fancyhead{}


\maketitle 


\section{Introduction and Related Work}

This work tackles the kinodynamic motion planning (KDMP) problem, which is the problem of computing a kinodynamically feasible motion plan that takes a robot from an initial configuration to a goal region. We begin by formally defining the KDMP problem and then survey the various approaches to solving it.

Let $X_C$ be the configuration space of the robot. The state space $X$ is defined as the Cartesian product of $X_C$ with $X_D$, the set of dynamics variables needed to fully describe the dynamics of the robot at any given instance in time. $X_D$ typically consists of time derivatives of elements of $X_C$. Let $U$ be the control space of the robot. The kinodynamic constraints are described by the differential equation $\dot{x}(t) = f(x(t), u(t))$, where $x(t) \in X$ and $u(t) \in U$. The KDMP problem differs from the purely kinematic motion planning (KMP) problem in that the KMP problem operates only on the configuration space $X_C$. Let $X_\mathrm{obs} \in X$ be the set of obstacle-colliding states and let $X_\mathrm{free} = X \backslash X_\mathrm{obs}$ be the set of valid states. Let $x_\mathrm{init} \in X_\mathrm{free}$ be the initial state of the robot and let $X_\mathrm{goal} \subset X_\mathrm{free}$ be the goal region. The objective of the KDMP problem is to find a collision free path that takes the robot from $x_\mathrm{init}$ to $X_\mathrm{goal}$ while obeying the kinodynamic constraints. The solution to the KDMP problem is a mapping $c(t) : [0, t_f] \rightarrow U$ from time to control inputs such that applying $c(t)$ starting from the state $x_\mathrm{init}$ traces out a path $\xi(t) : [0, t_f] \rightarrow X_\mathrm{free}$ such that $\xi(t_f) \in X_\mathrm{goal}$. A motion plan is considered optimal if it minimizes some cost function $C(t_f, c, \xi)$. The time-optimal solution minimizes the total time $t_f$. 

We review the state of the art approaches to solving the KDMP problem, including search-based planning, sampling-based planning, and learning-based solutions.

\textbf{Search-based planning} typically involves constructing a state lattice $G=(V, E)$ where $V \subset X_\mathrm{free}$ and the edges $E$ are pre-defined kinodynamically feasible motion primitives~\citep{pivtoraiko2011kinodynamic}. This lattice can then be searched using any graph search algorithm to obtain a solution. Increasing the resolution of the lattice increases the chances that a solution can be found, but comes with an exponential increase in computational cost. Finding a set of motion primitives that work well can also be difficult. Search-based planning algorithms are resolution optimal, in that they can find solutions that are optimal with respect to the discretization used.

\textbf{Sampling-based planning} makes use of a continually improving discretization of the state space through random sampling. One of the most effective sampling-based planning algorithms is the Rapidly Exploring Random Tree (RRT)~\citep{lavalle2001randomized} algorithm. The RRT algorithm works by incrementally sampling the state space and extending the nearest vertex in the tree towards that sample. Because this extension can be made by a random propagation of controls, the RRT algorithm can be applied to kinodynamic systems. 

RRTs have also been integrated with machine learning approaches to solve the KDMP problem. One such work employs the k-nearest-neighbors algorithm within the RRT framework to approximate the cost-to-go function and expand vertices in the tree~\citep{wolfslag2018rrt}. It however suffers from lack of optimality of computed trajectories and is only demonstrated to work for simple environments. Reinforcement Learning RRT (RL-RRT)~\citep{chiang2019rl} trains an RL agent to do local planning and uses an RRT to guide exploration. The resulting motion plan is suboptimal and since the RL local planner is trained on particular obstacle configurations, may not generalize well to new obstacle environments. Probabilistic Roadmap RL (PRM-RL)~\citep{faust2018prm} also uses RL for local planning but maps sensor observations directly to actions and does not attempt to produce optimal trajectories. 

RRT and the aforementioned RRT based algorithms do not produce optimal solutions. An alternative algorithm that produces optimal solutions while maintaining the computational efficiency of RRT is the RRT* algorithm~\citep{karaman2011sampling}. The RRT* algorithm makes use of a rewiring step to ensure that the path from the root to any vertex in the tree is optimal with respect to the connections in the tree. Because of this, the RRT* algorithm is asymptotically optimal. Many variants of the RRT* algorithm exist that have proven to work well in practice. Informed RRT*~\citep{gammell2014informed} improves on RRT* by ensuring that after an initial solution has been found, only states that have the potential to improve the solution are considered as candidate vertices. The BIT* algorithm~\citep{gammell2015batch} integrates graph-based and sampling-based planning techniques to more efficiently find and improve on solutions to the planning problem. 

One caveat of optimal sampling-based algorithms including RRT* and BIT* is that
they all require an optimal steering function to connect states. For any two
states $x_a, x_b \in X$, a steering function $S(x_a, x_b)$ produces a trajectory
$T : [0, t_f] \rightarrow U$, a mapping from time to control inputs. Integrating
$T$ from $x_a$ according to the equation of motion $f$ produces a path $\Gamma :
[0, t_f] \rightarrow X$, a mapping from time to states. An optimal steering
function $S^*(x_a, x_b)$ produces a trajectory $T^* : [0, t_f] \rightarrow U$
and a path $\Gamma^* : [0, t_f] \rightarrow X$ that in addition to satisfying
the aforementioned constraints, satisfies $\Gamma^*(t_f) = x_b$ and minimizes
some cost function, most commonly time. There exist algorithms like
Stable-Sparse RRT (SST)~\citep{li2015sparse} and Asymptotically Optimal RRT
(AO-RRT)~\citep{hauser2016asymptotically} that do not require a steering
function, but in practice they tend to take a significant amount of time to find
good quality solutions. Analytical solutions to the steering function exist for
some robots, such as those with linear dynamics~\citep{webb2013kinodynamic}, and so do
iterative solutions for specific systems such as omnidirectional robots with bounded acceleration~\citep{balaban2018realtime},
but for most systems computing the optimal steering function requires a call to
a computationally expensive nonlinear programming (NLP) solver. There are ways
to decrease the computational overhead of NLP solvers to make planning
tractable~\citep{xie2015toward}, but the NLP solver still remains a significant
bottleneck. Previous work has explored whether the steering function can be
learned~\citep{zheng2021sampling}. The learning setup used however was unable to
connect arbitrary start and goal states, a necessity if the steering function is
to be used in an optimal sampling-based planning algorithm.

\textbf{Reinforcement learning} has also been applied to the KDMP problem. One approach to KDMP for linear systems uses continuous-time Q-learning~\citep{kontoudis2019kinodynamic} to deal with dynamics whose differential equations of motion are inaccurate or unreliable. Some have also proposed formulating the KDMP problem entirely as a Markov Decision Process (MDP), where the solution KDMP policy is learned by RL~\citep{butyrev2019deep}.

\textbf{Learning optimal control} policies is a research area that has also been recently explored. Past works~\citep{ghosh2012near}~\citep{tsiotras2014real}~\citep{sanchez2018real}~\citep{tailor2019learning} have attempted to train a neural network to learn to produce optimal controls. All of these works however keep the goal state fixed, and so a new policy would need to be learned for every goal state.

\textbf{Optimization-based planning} methods rely on numerical optimization to find a solution to the goal that minimizes some cost objective. Example works that fall under this category include GuSTO~\citep{bonalli2019gusto}, CHOMP~\citep{ratliff2009chomp}, and STOMP~\citep{kalakrishnan2011stomp}. While such optimization-based methods are effective at finding solutions given good initialization, they find difficulty in handling cases where initial solutions are unknown, or when the optimization objective function has local minima (often due to obstacles).

\textbf{Integrated planning and learning} approaches have recieved significant attention lately. Search on the Replay Buffer (SoRB)~\citep{eysenbach2019search} demonstrates how the success rate of goal-conditioned RL on long horizon tasks can be improved by adding a planning component. SoRB however is unable to provide theoretical guarantees on completeness and faces difficulty when run on unseen environments. One approach~\citep{allen2016real} uses precomputation and machine learning to enable real-time kinodynamic planning for quadrotors. It is able to avoid solving two-point boundary value problems directly on quadrotor dynamics by using minimum snap polynomial splines, a technique that only works for a limited class of systems. Model-Predictive Motion Planning Networks (MPC-MPNet)~\citep{li2021mpc} proposes the integration of multiple neural components along with Model Predictive Control to solve the kinodynamic motion planning problem. The algorithm is compared with SST and is shown to have faster planning times. It however is unable to produce lower cost paths than SST and drops in performance on unseen environments.

While many approaches exist for kinodynamic planning, none so far are able to find low cost solutions in a computationally efficient manner. Approaches either sacrifice low solution cost or performance in pursuit of the other. We propose with this work that both are attainable. In contrast to many learning approaches, our work is also agnostic to obstacle configurations, and so generalizes well to new environments. 

In summary, in this paper we contribute:
\begin{inparaenum}[1)]
  \item State Supervised Steering Function (S3F), a learning-based technique to efficiently compute the steering function required by optimal sampling-based planners;
\item S3F-RRT*, a probabilistically complete RRT* algorithm that uses S3F as its
steering function; and 
  \item Empirical results for three kinodynamically-complex robots that demonstrate that S3F-RRT* outperforms state-of-the-art kinodynamic planners.
\end{inparaenum}

\section{Kinodynamic Planning with State Supervised Steering Function}

Recall from earlier that given two arbitrary states $x_a, x_b \in X$ the optimal
steering function $S^*(x_a, x_b)$ produces a trajectory $T^*$ that optimally
connects these two states. 
We are interested in learning a function $\tilde{S}$ that approximates $S^*$
such that $\tilde{S}(x_a, x_b)$ produces a near-optimal trajectory $\tilde{T}
\simeq T^* $. The control trajectory $\tilde{T}$ can be integrated to obtain a path $\tilde{\Gamma}$. 

\subsection{Steering Function Formulation}

\begin{figure*}
  \centering
  \begin{subfigure}{.5\textwidth}
    \centering
    \scalebox{0.55}{\input{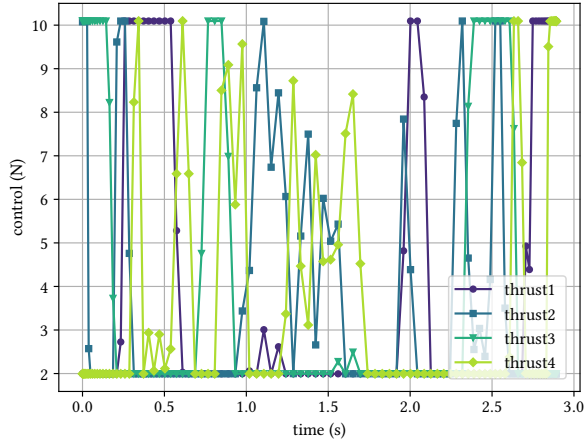}}
    \vspace{-0.6cm}
    \caption{Control function}
    \label{quad:control}
  \end{subfigure}%
  \begin{subfigure}{.5\textwidth}
    \centering
    \scalebox{0.55}{\input{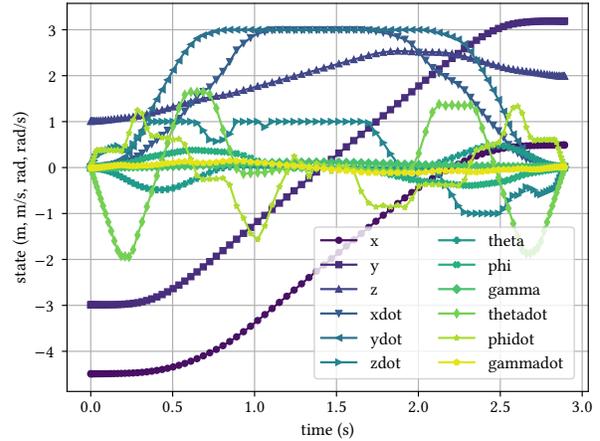}}
    \vspace{-0.6cm}
    \caption{State function}
    \label{quad:state}
  \end{subfigure}
  \vspace{-0.3cm}
  \caption{Quadrotor optimal control and state functions}
  \label{quad}
  \Description{Quadrotor optimal control and state functions}
\end{figure*}

Rather than learning $\tilde{S}$ that produces $\tilde{T}$ directly, we simplify
the learning problem by constructing $\tilde{T}$ in an iterative manner. This
can be done by learning a policy $\pi : X \times X \rightarrow U$ where $\pi$ takes
as input the current state of the robot $x_t$ and the goal state $x_b$ and
produces as output a constant-time control input $u$ to be executed for a fixed
period of time $\tau$, resulting in a new state $x_{t+1}$. Iteratively calling
$\pi$ for a fixed number of iterations $n$ results in the generation of a
piecewise constant control function that we denote $T_\mathrm{max}$. Integrating
$T_\mathrm{max}$ from the start state $x_a$ yields the state function
$\Gamma_\mathrm{max}$. 

$\tilde{T}$ can be obtained from $T_\mathrm{max}$ by discarding from $T_\mathrm{max}$ all controls past 
the time when the robot has reached the goal. To be able to do this, $n\tau$, the duration of 
$T_\mathrm{max}$, needs to be greater than the time it takes to connect any two states in $X$ 
optimally. 
The next step is to determine when $T_\mathrm{max}$ actually reaches 
the goal. The naive approach is to simply select the time at which $\Gamma_\mathrm{max}$ is closest 
to $x_b$ where closeness is defined using Euclidean distance. The problem with this approach is that 
our trajectories not only need to reach the goal but also be optimal with respect to the time to goal. 
Let’s say for one particular trajectory the robot reaches a distance $d_1$ from the goal at time $t_1$ and a 
distance $d_2$ from the goal at time $t_2$. If $d_2$ is the closest distance, then we are guaranteed 
to pick $t_2$ as our ending time, even if $d_2$ is marginally less than $d_1$. However it may be 
possible that $t_2$ is significantly greater than $t_1$, and so just to reach a little closer to 
the goal we’re sacrificing significant time optimality. This type of analysis motivates the solution 
to this problem. Since there are in essence two objectives that we are optimizing over when selecting 
the end time – distance to goal and time to reach goal – we should construct a reward function that 
fairly takes into account both. The following reward function $R(t)$ does exactly this:
\begin{equation}
\begin{aligned}
  &R(t) = \alpha \frac{||x_a-x_b||-||\Gamma_\mathrm{max}(t)-x_b||}{||x_a-x_b||}-t + R_b(\Gamma_\mathrm{max}(t), x_b) \\
  &R_b(\Gamma_\mathrm{max}(t), x_b) = \begin{cases}\beta\text{, if }||\Gamma_\mathrm{max}(t)-x_b||\leq \mu \\ 0\text{, otherwise}\end{cases} 
\end{aligned}
\end{equation}
The first term is a normalized difference of potential functions, and is 
maximized when the candidate terminal state is situated at the goal. The use of such potential functions 
was first introduced as a policy invariant mechanism for reward shaping~\citep{ng1999policy}. The second term, $-t$, takes 
into account the second objective: minimizing the time to the goal. Finally the third term provides 
an additional incentive if the candidate terminal state is very close ($\leq \mu$ distance away) to 
the goal. The hyperparameters $\alpha$, $\beta$, and $\mu$ are positive constants which can be tuned to adjust 
the relative weights of the three terms. For all time points which this reward function is calculated, the 
end time will be the time with the greatest reward. $\tilde{T}$ can then be obtained by discarding all 
control inputs in $T_\mathrm{max}$ after the end time.  
\vspace{-0.6cm}

\subsection{Learning the Policy}

The previous section showed how the steering function $\tilde{S}$ can be
constructed from a learned policy $\pi$. We next present how $\pi$ is learned.

We employ a supervised learning approach to learn $\pi$. 
Since
the end goal is to learn the optimal steering function, our dataset consists of
solutions to the optimal steering function for a large number of start and goal
states. This dataset, generated by an NLP solver, consists of a series of
trajectories each described by a tuple $(T^*, \Gamma^*, t_f)$. Here $T^* : [0,
t_f] \rightarrow U$ and $\Gamma^* : [0, t_f] \rightarrow X$ are the control and
state functions introduced earlier. We used the PSOPT~\citep{becerra2010solving}
optimal control library to generate the trajectories. The start and goal states
for each trajectory in the dataset are sampled uniformly at random from the full
state space to ensure good state space coverage.  

To learn $\pi$ using this dataset we employ the fact that $\pi$ is used to
generate control and state functions $\tilde{T}$ and $\tilde{\Gamma}$. The
arguably simplest approach is to have $\tilde{T}$ imitate $T^*$ for each start
and goal pair in the dataset. The discrepancy in the fact that $\tilde{T}$ is
piecewise constant whereas $T^*$ is continuous can be accounted for by simply
averaging controls in $T^*$ at each length $\tau$ time interval. $\pi$ would
then be directly supervised by the averaged constant controls in $T^*$. Although
straightforward, this approach fails to learn a well-performing policy. The
primary reason for this is that the learning problem involves the approximation
of a highly discontinuous function. $\pi$ is tasked with learning the optimal
control function which for many kinodynamic systems is a bang-bang control
function. Figure \ref{quad:control} shows an example of this for the quadrotor
robot -- such discontinuous control functions are hard to represent and learn
directly, even by supervised learning.

\algnewcommand\algorithmicforeach{\textbf{for each}}
\algdef{S}[FOR]{ForEach}[1]{\algorithmicforeach\ #1\ \algorithmicdo}
\begin{figure}
  \hrulefill \\
  \vspace{-0.05cm}
  \textbf{S3F-RRT*\texttt{()}~~~~~~~~~~~~~~~~~~~~~~~~~~~~~~~~~~~~~~~~~~~~~~~~~~~~~~~~~~~~~~~~~~~~~~~~~~~~~~~~~~~~~~~~~~~~~~~~~~~~~~~~~~~~~~~~~~~~~} \\
  \vspace{-0.25cm}
  \hrulefill
  \vspace{-0.1cm}
  \begin{algorithmic}[1]
    \State $V \leftarrow \{x_\mathrm{init}\}, E \leftarrow \emptyset$
    \For{$i = 1..n$}
      \State $x_\mathrm{rand} \leftarrow \texttt{SampleFree()}$
      \State $x_\mathrm{parent} \leftarrow \varnothing, x_\mathrm{ext} \leftarrow \varnothing, c_\mathrm{min} \leftarrow \infty$
      \State $X_\mathrm{near} \leftarrow \texttt{NearTo(}G=(V,E), x_\mathrm{rand}\texttt{)}$
      \ForEach{$x \in X_\mathrm{near}$}
        \State $T \leftarrow \texttt{Steer(}x, x_\mathrm{rand}\texttt{)}$
        \State $x_\mathrm{new} \leftarrow \texttt{EndState(}x, T\texttt{)}$
        \State $c_\mathrm{traj} \leftarrow \texttt{SteeringCost(}T\texttt{)}$
        \State $b \leftarrow \texttt{Dist(}x_\mathrm{new}, x_\mathrm{rand}\texttt{)} < r_\mathrm{error} \land \texttt{ObstacleFree(}x, T\texttt{)}$
        \If{$\texttt{Cost(}x\texttt{)} + c_\mathrm{traj} < c_\mathrm{min} \land b$}
          \State $x_\mathrm{parent} \leftarrow x$
          \State $x_\mathrm{ext} \leftarrow x_\mathrm{new}$
          \State $c_\mathrm{min} \leftarrow \texttt{Cost(}x\texttt{)} + c_\mathrm{traj}$
        \EndIf
      \EndFor
      \If{$c_\mathrm{min} \neq \infty$}
        \State $V \leftarrow V \cup \{x_\mathrm{ext}\}$
        \State $E \leftarrow E \cup \{(x_\mathrm{parent}, x_\mathrm{ext})\}$
      \EndIf
      \State $\texttt{Rewire(}V, E, x_\mathrm{ext}\texttt{)}$
    \EndFor
    \State \Return{$G=(V,E)$}
  \end{algorithmic}

\vspace{0.5cm}
\hrulefill \\
\vspace{-0.05cm}
\textbf{Rewire\texttt{(}$V$, $E$, $x_\mathrm{ext}$\texttt{)}~~~~~~~~~~~~~~~~~~~~~~~~~~~~~~~~~~~~~~~~~~~~~~~~~~~~~~~~~~~~~~~~~~~~~~~~~~~~~~~~~~~~~~} \\
\vspace{-0.25cm}
\hrulefill
\vspace{-0.1cm}
  \begin{algorithmic}[1]
    \State $X_\mathrm{near} \leftarrow \texttt{NearFrom(}G=(V,E), x_\mathrm{ext}\texttt{)}$
    \ForEach{$x \in X_\mathrm{near}$}
      \State $T \leftarrow \texttt{Steer(}x_\mathrm{ext}, x\texttt{)}$
      \State $x_\mathrm{new} \leftarrow \texttt{EndState(}x_\mathrm{ext}, T\texttt{)}$
      \State $c_\mathrm{traj} \leftarrow \texttt{SteeringCost(}T\texttt{)}$
      \State $b \leftarrow \texttt{Dist(}x_\mathrm{new}, x\texttt{)} < r_\mathrm{error} \land \texttt{ObstacleFree(}x_\mathrm{ext}, T\texttt{)}$
      \If{$\texttt{Cost(}x_\mathrm{ext}\texttt{)} + c_\mathrm{traj} < \texttt{Cost(}x\texttt{)} \land b$}
        \State $V \leftarrow V \backslash \{x\} \cup \{x_\mathrm{new}\}$
        \State $E \leftarrow E \backslash \{(\texttt{Parent(}x\texttt{)}, x)\} \cup \{(x_\mathrm{ext}, x_\mathrm{new})\}$
        \State $\texttt{PropagateRewiring(}x, x_\mathrm{new}\texttt{)}$
      \EndIf
    \EndFor
  \end{algorithmic}

\vspace{0.5cm}
\hrulefill \\
\vspace{-0.05cm}
\textbf{PropagateRewiring\texttt{(}$x$, $x_\mathrm{new}$\texttt{)}~~~~~~~~~~~~~~~~~~~~~~~~~~~~~~~~~~~~~~~~~~~~~~~~~~~~~~~~~~~~~~~~~} \\
\vspace{-0.25cm}
\hrulefill
\vspace{-0.1cm}
  \begin{algorithmic}[1]
    \ForEach{$x_\mathrm{child} \in \texttt{Children(}x\texttt{)}$}
      \State $T \leftarrow \texttt{Trajectories(}x, x_\mathrm{child}\texttt{)}$
      \If{$\texttt{ObstacleFree(}x_\mathrm{new}, T\texttt{)}$}
        \State $x_\mathrm{next} \leftarrow \texttt{EndState(}x_\mathrm{new}, T\texttt{)}$
        \State $V \leftarrow V \backslash \{x_\mathrm{child}\} \cup \{x_\mathrm{next}\}$
        \State $E \leftarrow E \backslash \{(x, x_\mathrm{child})\} \cup \{(x_\mathrm{new}, x_\mathrm{next})\}$
        \State $\texttt{PropagateRewiring(}x_\mathrm{child}, x_\mathrm{next}\texttt{)}$
      \Else
        \State $\texttt{DeleteSubtree(}x_\mathrm{child}\texttt{)}$
      \EndIf
    \EndFor
  \end{algorithmic}
\caption{S3F-RRT* Algorithm}
\label{algorithm}
\Description{S3F-RRT* Algorithm}
\end{figure}

The solution to this problem is to not use the optimal control function $T^*$ to
supervise the learning, but to instead use the optimal state function
$\Gamma^*$. We term this approach State Supervised Steering Function (S3F). Due
to the differential equation $f$ that defines the kinodynamic constraints, state
functions are guaranteed to be differentiable (and thus continuous), making
learning the optimal state function a feasible problem. Figure~\ref{quad:state}
shows an example of such a state function for the quadrotor robot -- note that
despite the associated control function (Figure~\ref{quad:control}) being
discontinuous, the state function is smooth and continuous. The goal now
is to have $\tilde{\Gamma}$ imitate $\Gamma^*$ for each trajectory in the
dataset. This can be done by ensuring that for various time points $t$ in the
range $[0, t_f]$, $\tilde{\Gamma}(t)=\Gamma^*(t)$. Recall that $\tilde{\Gamma}$
is only obtained by integrating $\tilde{T}$. This can be accounted for with the
following procedure: sample a series of time points $(t_0 ... t_k)$ in the range
$[0, t_f - \tau]$. For each time point $t$, assume that the robot is currently
at $\Gamma^*(t)$. If $\tilde{\Gamma}$ is to imitate $\Gamma^*$, the robot should
be at $\Gamma^*(t+\tau)$ at time $t + \tau$. The actual location of the robot at
this time under the current policy $\pi$ can be calculated by evaluating
$F(\Gamma^*(t), \pi(\Gamma^*(t), x_{t_f}))$ where $x_{t_f}$ is the goal state of
the trajectory and $F : X \times U \rightarrow X$ is an integration function
that given a current state and a constant control, integrates the differential
equation of motion $f$ to compute the state $\tau$ units of time later. To get
$\tilde{\Gamma}$ to imitate $\Gamma^*$ we can thus optimize the following
learning objective:
\begin{equation}
  \argmin_\theta \sum_{\Gamma^*\in D} \sum_{t\in(t_0, ..., t_k)} [F(\Gamma^*(t), \pi(\Gamma^*(t), x_{t_f}))-\Gamma^*(t+\tau)]^2
\end{equation}
where $\theta$ is the parameter set of $\pi$ and $D$ is the dataset of optimal trajectories. The key takeaway from this learning procedure is that we are learning $\pi$ indirectly. $\pi$ is a component of a state function that we are training to be optimal, and by learning this state function we are indirectly learning the control function $\pi$. 

\begin{figure*}
  \centering
  \includegraphics[width=7.0in, height=2.0in]{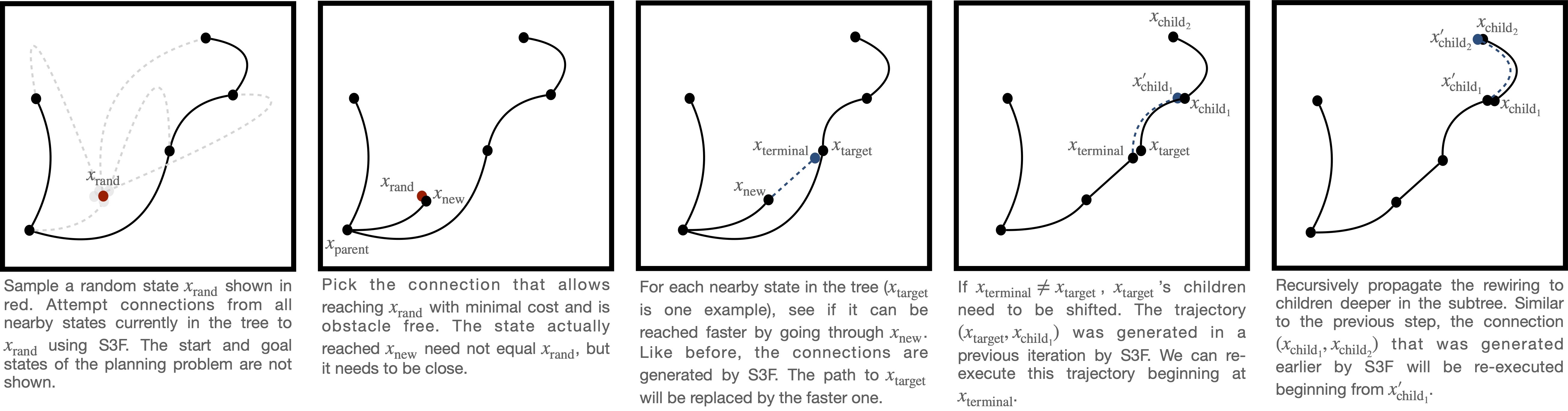}
  \vspace{-7mm}
  \caption{Illustration of the steps that take place in one iteration of S3F-RRT*}
  \label{illustration}
  \Description{Illustration of the steps that take place in one iteration of S3F-RRT*}
\end{figure*}

\subsection{Sampling-based Planning With Learned Steering Functions: S3F-RRT*}

We present S3F-RRT*, a sampling-based planning algorithm that uses the learned
steering function to solve the optimal kinodynamic motion planning problem. S3F-RRT* uses S3F as
the steering function, and employs a modified rewiring procedure to
overcome any potential local inaccuracies in S3F's trajectories.
Figure~\ref{algorithm} presents the algorithmic formulation of S3F-RRT*. Figure 
\ref{illustration} shows a visualization of what goes on in each S3F-RRT* iteration. Each
iteration begins by sampling a random collision-free state $x_\mathrm{rand}$.
The $\texttt{NearTo}$ function is then called to obtain the set of all vertices
in the current RRT* tree that are near $x_\mathrm{rand}$. A state is considered
to be near $x_\mathrm{rand}$ if the time of the optimal trajectory from that
state to $x_\mathrm{rand}$ is below some threshold.
Each
state in $X_\mathrm{near}$ is then evaluated as a possible parent to
$x_\mathrm{rand}$. $\texttt{Steer}(x, x_\mathrm{rand})$ invokes S3F to compute a control function $T$ that connects $x$ to
$x_\mathrm{rand}$. 
To determine $x_\mathrm{new}$, where the trajectory actually ends,
$\texttt{EndState}(x, T)$ integrates $T$ from $x$. $\texttt{SteeringCost}(T)$ returns
the cost of the trajectory $T$, which for a time-optimal planning problem is
simply the duration of $T$. 
$\texttt{Cost}(x)$ returns the cost of going from the start state to $x$ in the
current RRT* tree.  The
$\texttt{Dist}$ function returns the Euclidean distance between two states and
is used to ensure that the terminal state of the trajectory is close enough to
the target state. $\texttt{ObstacleFree}(x, T)$ integrates the control function
$T$ beginning at $x$ to obtain a state function that maps time to states.
$\texttt{ObstacleFree}$ then ensures that every state in this state function
does not collide with obstacles. 

After the best parent has been found and the state has been added to the tree, the rewiring procedure is invoked. Here, the set $X_\mathrm{near}$ is constructed by calling $\texttt{NearFrom}(G=(V, E), x_\mathrm{ext})$. The difference between $\texttt{NearFrom}$ and $\texttt{NearTo}$ is that $\texttt{NearFrom}(G=(V, E), x_\mathrm{ext})$ considers connections from $x_\mathrm{ext}$ to other states as opposed to from other states. $\texttt{Parent}(x)$ returns the parent of $x$ in the current RRT* tree.

The rewiring procedure internally calls $\texttt{PropagateRewiring}$. $\texttt{Children}(x)$ returns the set of all children states to $x$ in the current RRT* tree. $\texttt{Trajectories}(x, x_\mathrm{child})$ returns the control function that was computed earlier by S3F to connect $x$ and $x_\mathrm{child}$. 

One of the key differences between this algorithm and the original RRT* algorithm is the absence in this algorithmic formulation of finding the nearest state. In the original RRT* algorithm, after a state is randomly sampled, the nearest state in the tree is selected as a source of expansion. A new state is obtained by extending the nearest state towards the randomly sampled state up to a distance $\eta$, and the resultant state is used as the target for the subsequent steering function evaluations. We entirely eliminate this component of the algorithm for simplicity, a modification that was first proposed in Kinodynamic RRT*~\citep{webb2013kinodynamic}. This modification is known to not hurt theoretical asymptotic optimality of the RRT* algorithm. The main other difference in this algorithm is a series of modifications that deal with the fact that the learned steering function will reach within an error radius of the goal state. Notable among these is the existence of the $\texttt{PropagateRewiring}$ procedure.
\vspace{-5mm}

\subsection{Correctness of S3F-RRT*}

There are two criteria for correctness:
solutions returned by S3F-RRT* must satisfy the kinodynamic constraints and must avoid obstacles. Any 
operation on the S3F-RRT* tree (such as rewiring) can be reformulated as a sequence of state addition 
and state deletion operations. State deletion by default cannot violate correctness. 
State addition also satisfies correctness because (1) a state is only added to the tree if the path from 
the parent to the state is collision free and (2) the path from the parent to the state is generated by 
integrating the differential equation of motion, implying that the path to the state satisfies kinodynamic 
constraints. Thus S3F-RRT* is correct.

\subsection{Probabilistic Completeness Proof of S3F-RRT*}

\begin{figure*}
  \centering
  \begin{subfigure}[t]{2.0in}
    \centering
    \includegraphics[height=2.0in]{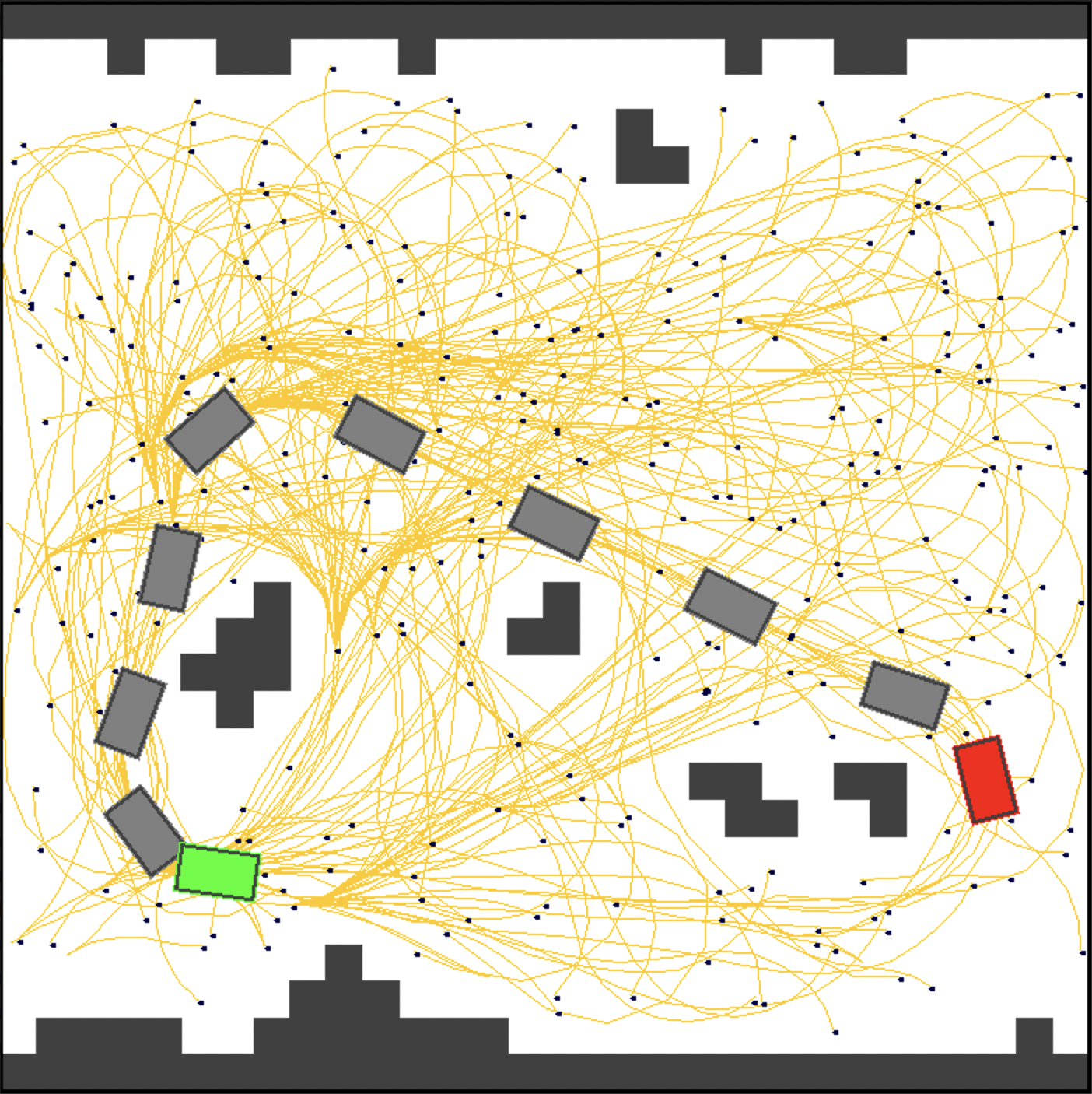}
    \caption{Dubin's Car with Acceleration}
  \end{subfigure}%
  \hspace{0.5cm}
  ~
  \centering
  \begin{subfigure}[t]{2.0in}
    \centering
    \includegraphics[height=2.0in]{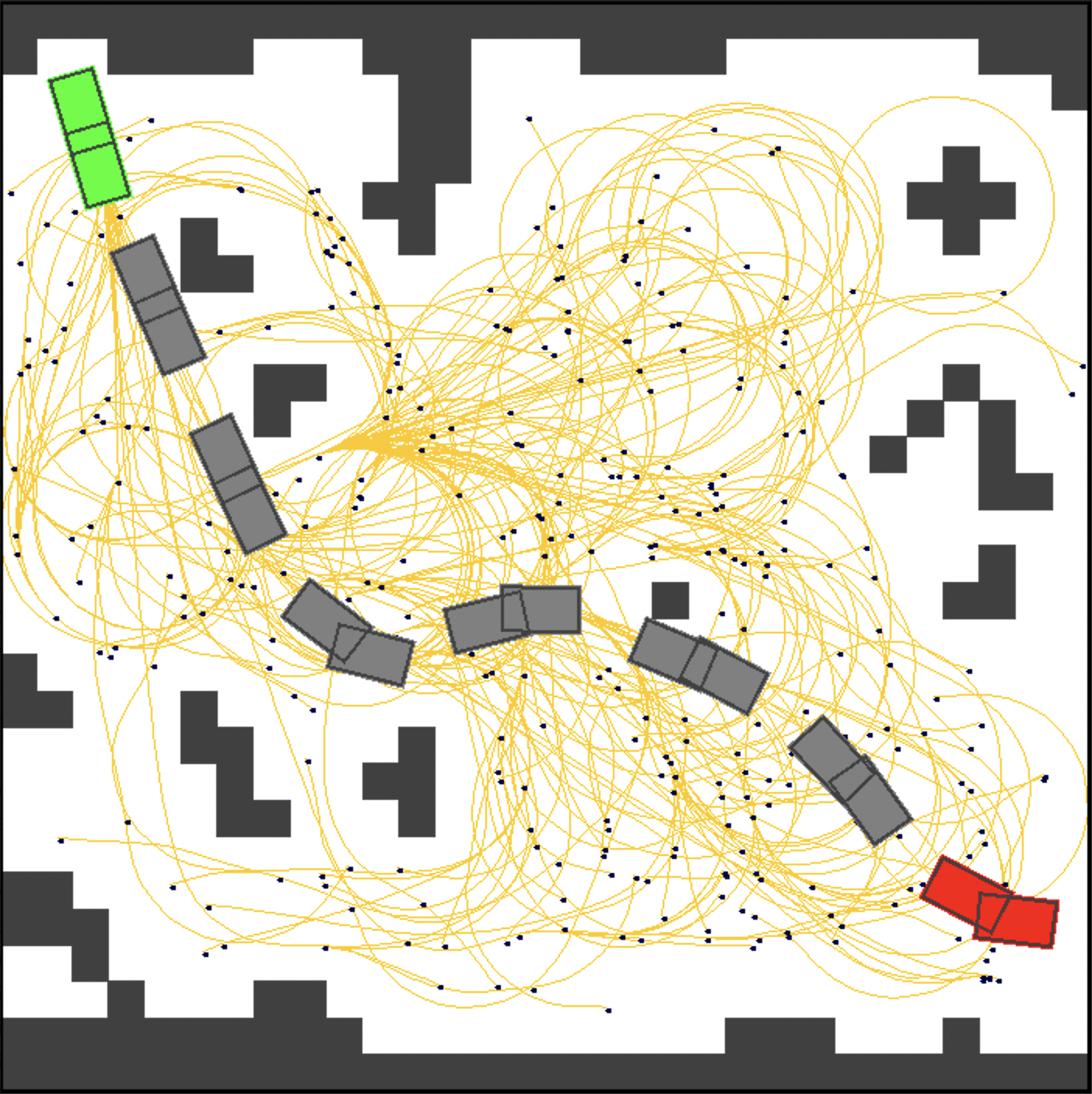}
    \caption{Tractor Trailer}
  \end{subfigure}%
  \hspace{0.5cm}
  ~
  \centering
  \begin{subfigure}[t]{2.0in}
    \centering
    \includegraphics[height=2.0in]{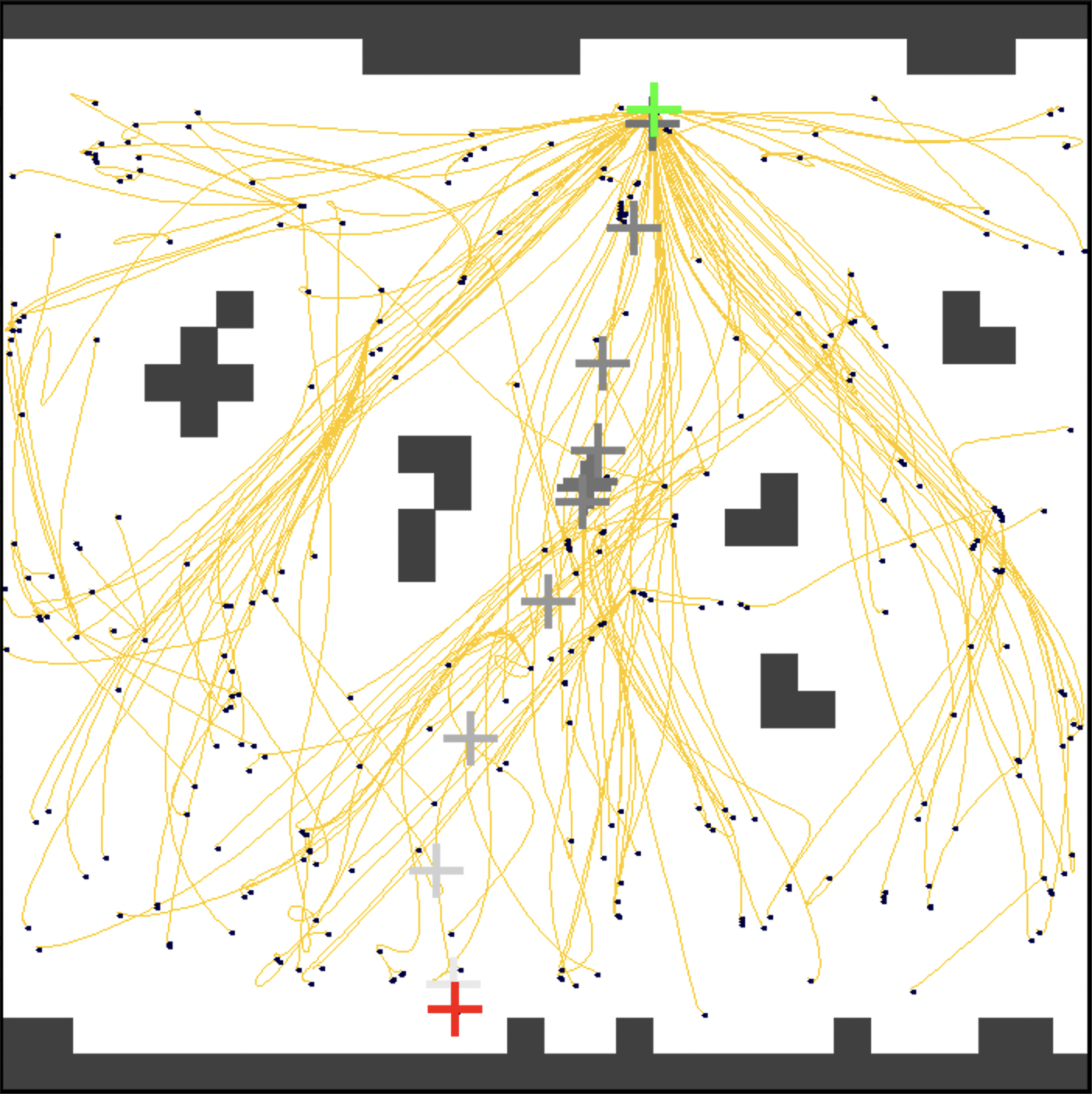}
    \caption{Quadrotor}
  \end{subfigure}
  
  \vspace{-0.2cm}
  \caption{Sample planning trees after running S3F-RRT* on the three robot domains. The best solution found from the (green) start state to the (red) goal state is shown explicitly. A large spacing between consecutive gray states indicates a high velocity. In the planning trees, the dots are the vertices of the tree and the orange connections are the edges. In (c), dark gray states are of low elevation and light gray states are of high elevation. }
  \label{sample_solns}
  \Description{Sample planning trees after running S3F-RRT* on the three robot domains. The best solution found from the (green) start state to the (red) goal state is shown explicitly. A large spacing between consecutive gray states indicates a high velocity. In the planning trees, the dots are the vertices of the tree and the orange connections are the edges. In (c), dark gray states are of low elevation and light gray states are of high elevation. }
\end{figure*}

Here we present a summary of the proof of probabilistic completeness (PC) of the S3F-RRT* algorithm. S3F-RRT* is a 
	modification of the original RRT* algorithm~\citep{karaman2011sampling} designed to make use of a 
	learned steering function. The proof largely follows the structure of the proof of probabilistic
	completeness of geometric RRT~\citep{kleinbort2018probabilistic}, though significant modifications have 
	been made to take into account the presence of kinodynamic constraints and the use of a learned steering 
	function. The full proof can be found in the supplementary materials.

  Let $c^*(x_a, x_b)$ denote the cost of the optimal trajectory from 
  $x_a$ to $x_b$, or equivalently the kinodynamic distance from $x_a$ to $x_b$. We assume that $c^*$ obeys the triangle inequality, that is, $c^*(x_a, x_b) \leq c^*(x_a, x) + c^*(x, x_b)$ for all 
  $x \in X$. Let $\tilde{S}$ be a learned steering function. We assume that with nonzero probability $p$, $\tilde{S}(x_a, x_b)$ yields 
  a state function $\tilde{\Gamma}$ that satisfies $c^*(\tilde{\Gamma}(t), x_b) \leq c^*(x_a, x_b)$ for all $t \in [0, t_f]$. This 
  assumption in essence states that every state along the path produced by $\tilde{S}$ is kinodynamically closer to the goal state than the start state 
  is. For a steering function trained to be optimal, this is a reasonable assumption.    
  
  We will use $B_r(x)$ to denote the subset of the state space $X$ defined by $\{x'|c^*(x', x) \leq r\}$. For simplicity, we assume 
  that there exist $\delta_\mathrm{goal} > 0, x_\mathrm{goal} \in X_\mathrm{goal}$ such that 
  $B_{\delta_\mathrm{goal}}(x_\mathrm{goal}) \subseteq X_\mathrm{goal}$. We denote this simplified goal region $B_{\delta_\mathrm{goal}}(x_\mathrm{goal})$ as $X_\mathrm{goal}^*$. The goal of the motion planning problem is to find a kinodynamically 
  feasible path $\pi : [0, t_\pi] \rightarrow X_\mathrm{free}$ such that $\pi(0) = x_\mathrm{init}$ and $\pi(t_\pi) \in X_\mathrm{goal}^*$. 
  The clearance of $\pi$ is the maximal $\delta_\mathrm{clear}$ such that $B_{\delta_\mathrm{clear}}(\pi(t)) \in X_\mathrm{free}$ for 
  all $t \in [0, t_\pi]$.

  We assume for this proof that there exists a valid trajectory $\pi : [0, t_\pi] \rightarrow 
  X_\mathrm{free}$ with clearance $\delta_\mathrm{clear} > 0$. Without loss of generality, assume 
  that $\pi(t_\pi) = x_\mathrm{goal}$, i.e., the trajectory terminates at the center of the goal 
  region. Let $L$ be the total cost of $\pi$, and let $v = min(\delta_\mathrm{clear}, 
  \delta_\mathrm{goal})$. Let $m = \frac{3L}{v}$. Define a sequence of $m+1$ points $x_0 = 
  x_\mathrm{init}, ..., x_m = x_\mathrm{goal}$ along $\pi$ such that the cost of traversal from one 
  point to the next is $\frac{v}{3}$. Therefore, $c^*(x_i, x_{i+1}) \leq \frac{v}{3}$ for every $0 
  \leq i < m$. We will now prove that as the number of iterations increases, the S3F-RRT* algorithm will generate a 
  path passing through the vicinity of these $m+1$ points with probability asymptotically approaching one. 

  \begin{lemma}
    \label{lemma}
      Suppose that S3F-RRT* has reached $B_\frac{v}{3}(x_i)$, that is, its tree contains a vertex $x_i'$ such that $x_i' \in B_\frac{v}{3}(x_i)$. If $x_\mathrm{rand} \in B_\frac{v}{3}(x_{i+1})$ and $c^*(x_i, x_\mathrm{rand}) \leq \frac{v}{3}$ (equivalently $x_i \in B_\frac{v}{3}(x_\mathrm{rand})$), then the path from the nearest neighbor $x_\mathrm{near}$ to $x_\mathrm{rand}$ lies entirely in $X_\mathrm{free}$ with probability $p$.
    \end{lemma}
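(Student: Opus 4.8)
The plan is to chain two applications of the triangle inequality for $c^*$ around a single use of the steering-function assumption, and then close with the clearance hypothesis. First I would place the existing tree vertex $x_i'$ and the sample $x_\mathrm{rand}$ relative to the waypoint $x_i$. Since $x_i' \in B_\frac{v}{3}(x_i)$ gives $c^*(x_i', x_i) \leq \frac{v}{3}$, and the hypothesis $x_i \in B_\frac{v}{3}(x_\mathrm{rand})$ gives $c^*(x_i, x_\mathrm{rand}) \leq \frac{v}{3}$, the triangle inequality yields $c^*(x_i', x_\mathrm{rand}) \leq \frac{2v}{3}$. Because $x_\mathrm{near}$ is by definition the tree vertex minimizing the kinodynamic distance to $x_\mathrm{rand}$ (the quantity measured by \texttt{NearTo}) and $x_i'$ is one admissible tree vertex, it follows that $c^*(x_\mathrm{near}, x_\mathrm{rand}) \leq c^*(x_i', x_\mathrm{rand}) \leq \frac{2v}{3}$.

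Next I would invoke the standing assumption on $\tilde{S}$: with probability $p$, the state function $\tilde{\Gamma}$ returned by $\tilde{S}(x_\mathrm{near}, x_\mathrm{rand})$ satisfies $c^*(\tilde{\Gamma}(t), x_\mathrm{rand}) \leq c^*(x_\mathrm{near}, x_\mathrm{rand})$ for every $t \in [0, t_f]$. Combined with the bound just obtained, this gives $c^*(\tilde{\Gamma}(t), x_\mathrm{rand}) \leq \frac{2v}{3}$ uniformly along the steering path, again with probability $p$. I would then transfer this bound from $x_\mathrm{rand}$ to the reference waypoint $x_{i+1}$: using $x_\mathrm{rand} \in B_\frac{v}{3}(x_{i+1})$, i.e. $c^*(x_\mathrm{rand}, x_{i+1}) \leq \frac{v}{3}$, a second triangle inequality gives $c^*(\tilde{\Gamma}(t), x_{i+1}) \leq \frac{2v}{3} + \frac{v}{3} = v$ for every $t$.

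Finally I would discharge collision-freeness. Since $v = \min(\delta_\mathrm{clear}, \delta_\mathrm{goal}) \leq \delta_\mathrm{clear}$ and $x_{i+1}$ is a point on the reference trajectory $\pi$, the clearance hypothesis gives $B_v(x_{i+1}) \subseteq B_{\delta_\mathrm{clear}}(x_{i+1}) \subseteq X_\mathrm{free}$. Hence every state $\tilde{\Gamma}(t)$ lies in $X_\mathrm{free}$, so the entire path from $x_\mathrm{near}$ to $x_\mathrm{rand}$ is collision-free, and this holds exactly with the probability $p$ inherited from the steering-function assumption, which is the conclusion of the lemma.

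The main obstacle is bookkeeping around the asymmetry of $c^*$. Because the kinodynamic cost is generally not symmetric, each ball $B_r(\cdot)$ and each triangle-inequality step must be oriented so that the path or tree point sits in the first argument and the reference point in the second, matching both the definition $B_r(x) = \{x' \mid c^*(x', x) \leq r\}$ and the direction in which \texttt{NearTo} measures distance. I would take particular care that the steering-function assumption bounds $c^*(\tilde{\Gamma}(t), x_\mathrm{rand})$ in precisely the orientation the second triangle inequality consumes, and that the hypotheses $x_i \in B_\frac{v}{3}(x_\mathrm{rand})$ and $x_\mathrm{rand} \in B_\frac{v}{3}(x_{i+1})$ are applied with their stated arguments rather than their reverses; getting any of these orientations backwards would break the chain even though the arithmetic is otherwise trivial.
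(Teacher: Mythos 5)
Your proof is correct and follows essentially the same route as the paper's: a triangle-inequality chain through $x_i$ and $x_\mathrm{rand}$ combined with the nearest-neighbor property to bound the tree vertex within $\frac{2v}{3}$ of $x_\mathrm{rand}$, the steering-function assumption to extend that bound to every state on $\tilde{\Gamma}$ with probability $p$, and the clearance of $\pi$ at $x_{i+1}$ to conclude collision-freeness. The only difference is presentational — the paper first shows $x_\mathrm{near} \in B_v(x_{i+1})$ and then remarks that the same logic applies to each path state, whereas you apply the bound uniformly to $\tilde{\Gamma}(t)$ from the start.
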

    
    \begin{proof}
      See supplementary materials.
    \end{proof}
    
    \begin{theorem}
      \label{theorem}
      The probability that S3F-RRT* fails to reach $X_\mathrm{goal}^*$ from $x_\mathrm{init}$ after $k$ iterations is at most $ae^{-bk}$, for some constants $a, b \in \mathbb{R}_{>0}$. 
    \end{theorem}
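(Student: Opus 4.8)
The plan is to reduce the theorem to a \emph{stage-advancement} argument along the waypoints $x_0,\dots,x_m$ and then convert a uniform per-iteration advancement probability into an exponential tail bound. Say the tree is \emph{at stage $i$} once it contains a vertex in $B_{\frac{v}{3}}(x_i)$. Since $x_0=x_\mathrm{init}$ is the root, the tree begins at stage $0$, and because $\frac{v}{3}\le v\le\delta_\mathrm{goal}$, any vertex in $B_{\frac{v}{3}}(x_m)=B_{\frac{v}{3}}(x_\mathrm{goal})\subseteq X_\mathrm{goal}^*$ certifies that the goal region has been reached. The argument therefore rests on two pieces: (a) a uniform lower bound $\rho>0$ on the probability that a single iteration carries the tree from stage $i$ to stage $i+1$, and (b) a concentration step turning (a) into the claimed bound $ae^{-bk}$. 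I would also record that stage coverage is preserved under rewiring---because \texttt{Rewire} and \texttt{PropagateRewiring} only relocate vertices along kinodynamically feasible, collision-checked paths---so that ``having reached stage $i$'' is a property the tree never loses.

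\textbf{Per-iteration advancement.} First I would establish $\rho>0$ independent of $i$ and of the current tree. Three things must happen in one iteration that starts at stage $i$. (i) \texttt{SampleFree} must draw $x_\mathrm{rand}$ into $R_i=B_{\frac{v}{3}}(x_{i+1})\cap B_{\frac{v}{3}}(x_i)$; since $c^*(x_i,x_{i+1})\le\frac{v}{3}$ we have $x_{i+1}\in R_i$, and as $B_{\frac{v}{3}}(x_i)\subseteq X_\mathrm{free}$ (the clearance satisfies $\delta_\mathrm{clear}\ge v$), $R_i$ is a positive-measure subset of the free space, so uniform sampling hits it with some fixed probability $\rho_\mathrm{samp}>0$. (ii) The stage-$i$ vertex $x_i'\in B_{\frac{v}{3}}(x_i)$ obeys $c^*(x_i',x_\mathrm{rand})\le c^*(x_i',x_i)+c^*(x_i,x_\mathrm{rand})\le\frac{2v}{3}$, so $x_i'\in X_\mathrm{near}$ and a valid steering attempt is made; by Lemma~\ref{lemma}, with probability $p$ the steering event is ``good,'' meaning the path from the chosen near vertex to $x_\mathrm{rand}$ is collision free \emph{and} the monotone property $c^*(\tilde{\Gamma}(t),x_\mathrm{rand})\le c^*(x_\mathrm{near},x_\mathrm{rand})$ holds. (iii) On this good event the endpoint $x_\mathrm{ext}$, which the \texttt{Dist} test keeps within $r_\mathrm{error}$ of $x_\mathrm{rand}$, satisfies $c^*(x_\mathrm{ext},x_{i+1})\le c^*(x_\mathrm{ext},x_\mathrm{rand})+c^*(x_\mathrm{rand},x_{i+1})\le\frac{v}{3}$ provided $r_\mathrm{error}$ is small enough relative to $v$ that its contribution is absorbed; hence $x_\mathrm{ext}\in B_{\frac{v}{3}}(x_{i+1})$ and the tree advances to stage $i+1$. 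Setting $\rho=\rho_\mathrm{samp}\,p$ gives the uniform bound.

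\textbf{From $\rho$ to the exponential tail.} Because the single-step advancement probability is at least $\rho$ regardless of the current stage or tree, the number of iterations required to achieve the $m$ advancements needed to reach stage $m$ is stochastically dominated by a sum of $m$ independent $\mathrm{Geometric}(\rho)$ variables. Equivalently, S3F-RRT* fails after $k$ iterations only if a $\mathrm{Binomial}(k,\rho)$ count of successes falls below $m$. For $k\ge 2m/\rho$ we have $m\le\frac{1}{2}k\rho$, and a Chernoff bound gives $\Pr[\mathrm{Binomial}(k,\rho)<m]\le e^{-k\rho/8}$; for the finitely many $k<2m/\rho$ the trivial bound $1$ is at most $ae^{-bk}$ once $a$ is taken large enough. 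Choosing $b=\rho/8$ and an appropriate $a$ yields $ae^{-bk}$ as claimed.

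\textbf{Main obstacle.} The concentration step is routine; the real work is making the per-iteration bound airtight for a \emph{learned} steerer. The delicate point is step (iii): the new vertex is $x_\mathrm{ext}$, not $x_\mathrm{rand}$, and the learned steering function only reaches within the error radius $r_\mathrm{error}$ of its target. Controlling $c^*(x_\mathrm{ext},x_\mathrm{rand})$ from the Euclidean $r_\mathrm{error}$ bound---and ensuring this slack fits inside the $\frac{v}{3}$ budget uniformly over the state space---requires a local comparison between $c^*$ and Euclidean distance together with a smallness assumption on $r_\mathrm{error}$. A second subtlety is verifying that rewiring cannot destroy the stage-$i$ coverage invariant, since relocating a covered vertex could in principle push it out of $B_{\frac{v}{3}}(x_i)$; I would close this by arguing that the vertices produced by the best-parent extensions already suffice for the coverage argument and that rewiring only lowers costs while preserving feasibility.
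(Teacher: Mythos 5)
Your proof follows essentially the same route as the paper's: the same waypoint decomposition, the same per-iteration advancement probability assembled from Lemma~\ref{lemma} together with a positive sampling probability, and the same reduction to obtaining $m$ successes in $k$ Bernoulli trials with a uniform success probability $r>0$, differing only in that you close the tail with a Chernoff bound where the paper estimates the binomial sum $\Pr[X_k<m]$ directly. Your explicit treatment of the $r_\mathrm{error}$ slack and of the rewiring invariant addresses points the paper leaves implicit (it simply posits the corresponding conditional probability $\kappa_i>0$ and does not discuss rewiring), but this added care does not change the structure of the argument.
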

    
    \begin{proof}
      See supplementary materials for full proof of Theorem \ref{theorem}. Here we present an overview.
      Assume that $B_\frac{v}{3}(x_i)$ already contains an S3F-RRT* vertex. Let $r_i$ be the probability 
      that in the next iteration a S3F-RRT* vertex will be added to $B_\frac{v}{3}(x_{i+1})$. The proof 
      in essence relies on the fact that with Lemma \ref{lemma} in place, it can be shown that the 
      probability $r_i$ is nonzero and is independent of the number of S3F-RRT* iterations $k$. In order 
      for the S3F-RRT* algorithm to reach $X_\mathrm{goal}^*$ from $x_\mathrm{init}$, a S3F-RRT* vertex 
      must be added to $B_\frac{v}{3}(x_{i+1})$ $m$ times for $0 \leq i < m$. If we let $r$ be the minimum 
      of the transition probabilities $\{r_i | \forall i (0 \leq i < m)\}$, reaching the goal can be 
      described as $k$ Bernoulli trials with success probability $r$, where the goal is reached after $m$ successful 
      outcomes. With this formulation it can be shown that the probability the goal is not reached decays 
      to zero exponentially with $k$, and thus S3F-RRT* is probabilistically complete. 
    \end{proof}



\section{Experimental Results}

We compared S3F to the current state of the art on three challenging problem spaces: Dubin’s car with acceleration, tractor trailer, and quadrotor robots. For each problem space, we solve a series of minimum-time motion planning problems using S3F-RRT*, RRT* using NLP for steering, RRT, and SST. The BARN dataset~\citep{perille2020benchmarking} was used to obtain realistic, obstacle dense maps to run the comparisons on. Figure \ref{sample_solns} depicts sample solutions and their planning trees found by S3F-RRT* on the three problem spaces.

\subsection{Robot Kinodynamics}

The three robot models used in this paper are the Dubin’s car with acceleration, tractor trailer, and quadrotor robots. Here we introduce these robot domains in more detail along with their equations of motion.
\vspace{0.2cm} \\
\textbf{Dubin’s Car with Acceleration}: $X = [x, y, \theta, v], U = [a, k]$ 
\begin{equation}
  \begin{aligned}
    \dot x&=v \cos(\theta) \hspace{1.5cm} & \dot y&=v\sin(\theta) \\
    \dot \theta &= vk      \hspace{1.5cm}     & \dot v&=a
  \end{aligned}
\end{equation}

\noindent The Dubin’s car with acceleration is a curvature constrained robot car. $x$, $y$, $\theta$, and $v$ are the $x$-position, $y$-position, orientation, and velocity of the car, and $a$ and $k$ are the acceleration and curvature control inputs. The motion of the car is subject to the curvature constraint $|k| \leq |\frac{1}{r_\mathrm{min}}|$ where $r_\mathrm{min}$ is the minimum radius of turning. 

\begin{figure*}
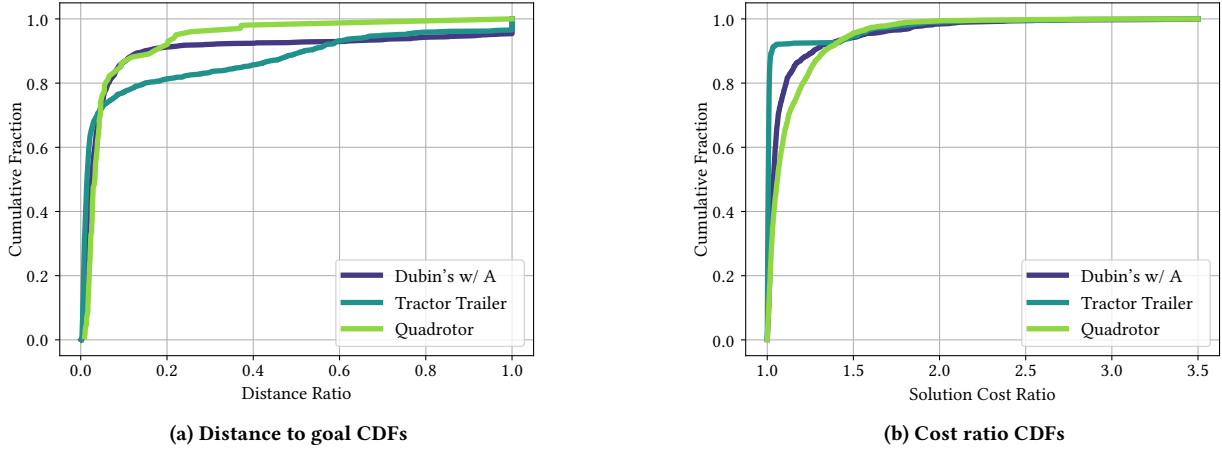

  \centering
  \begin{subfigure}[t]{3.3in}
    \centering
    \scalebox{0.5}{\input{./images/dist_cdf.pgf}}
    \caption{Distance to goal CDFs}
    \label{dist_cdf}
  \end{subfigure}%
  \hspace{0.5cm}
  ~
  \centering
  \begin{subfigure}[t]{3.3in}
    \centering
    \scalebox{0.5}{\input{./images/cost_cdf.pgf}}
    \caption{Cost ratio CDFs}
    \label{cost_cdf}
  \end{subfigure}%
  \hspace{0.5cm}
  \vspace{-0.2cm}
  \caption{CDFs of (a) the distance remaining to the goal for the three robot domains and (b) the ratios of costs of S3F's solutions over NLP's solutions for the three robot domains. Plots depict 1500 data points.}
  \Description{CDFs of (a) the distance remaining to the goal for the three robot domains and (b) the ratios of costs of S3F's solutions over NLP's solutions for the three robot domains. Plots depict 1500 data points.}
\end{figure*}

\vspace{0.2cm}
\noindent \textbf{Tractor Trailer}: $X = [x, y, \theta, v, \alpha], U = [a, \phi]$ 
\begin{equation}
  \begin{aligned}
    \dot x&=v\cos(\theta) \hspace{1.5cm}& \dot v&=a \\
    \dot y&=v\sin(\theta) \hspace{1.5cm}& \dot \alpha&=(\frac{v}{D})\sin(\theta-\alpha) \\
    \dot \theta&=(\frac{v}{L})\tan(\phi) 
  \end{aligned} 
\end{equation}
The tractor trailer robot consists of a four wheeled robot car pulling a two wheeled trailer. The robot car in isolation has the same dynamics as the Dubin’s car with acceleration. $x$, $y$, $\theta$, $v$, and $\alpha$ are the $x$-position, $y$-position, orientation, of the car, velocity of the car, and orientation of the trailer, respectively. The control inputs are $a$ and $\phi$ which represent the acceleration and heading. $L$ is the distance between the front and rear axles of the robot car, and $D$ is the length of the rod connecting the trailer with the car. 

\vspace{0.2cm}
\noindent \textbf{Quadrotor}: $X = [x, y, z, \dot x, \dot y, \dot z, \theta, \phi, \gamma, \dot \theta, \dot \phi, \dot \gamma], U = [\tau_1, \tau_2, \tau_3, \tau_4]$ 
\vspace{-0.3cm}
\begin{equation}
\begin{aligned}
  \ddot x &= \frac{1}{w}(\cos \theta \sin \phi \cos \gamma + \sin \theta \sin \gamma)(\tau_1+\tau_2+\tau_3+\tau_4) \\
  \ddot y &= \frac{1}{w}(\cos \theta \sin \phi \sin \gamma - \sin \theta \cos \gamma)(\tau_1+\tau_2+\tau_3+\tau_4)  \\
  \ddot z &= \frac{1}{w}(\cos \theta \cos \phi)(\tau_1+\tau_2+\tau_3+\tau_4)  \\
  \ddot \theta &= \frac{L(\tau_1 - \tau_3) - 2wL^2 \dot \phi \dot \gamma}{2wr^2 / 5 + 2wL^2} \\
  \ddot \phi &= \frac{L(\tau_2 - \tau_4) + 2wL^2 \dot \theta \dot \gamma}{2wr^2 / 5 + 2wL^2} \\
  \ddot \gamma &= \frac{b(\tau_1-\tau_2+\tau_3-\tau_4)}{2wr^2 / 5 + 4wL^2}
\end{aligned}
\end{equation}
The quadrotor is a lightweight, agile robot heavily used in research and industrial applications. $x$, $y$, and $z$ represent the Cartesian coordinates of the quadrotor. $\theta$, $\phi$, and $\gamma$ represent the pitch, roll, and yaw, respectively. $w$ is the weight of the quadrotor, $L$ is the length of an arm, $r$ is the radius of the sphere representing the center blob of the quadrotor, $g$ is the gravitational acceleration, and $b$ is a constant. $\tau_1$ through $\tau_4$ represent the thrusts generated by each of the four motors and are the control inputs for the quadrotor. 

\subsection{S3F Evaluation}

We evaluate the learned steering function for each of the three problem spaces on its ability to consistently reach the goal and on the time optimality of its solutions. 

We measured the former by computing for $1500$ steering function queries how much of the initial distance between the start and goal states was not traversed in the produced trajectory. Mathematically this is expressed by $\frac{d_f}{d_s}$ where $d_s$ is the distance from the start to the goal and $d_f$ is the distance from the end state of the trajectory produced by S3F to the goal. A value of $0$ indicates that the goal is reached exactly. Figure \ref{dist_cdf} depicts the cumulative distribution function (CDF) plot of $1500$ evaluations of this expression. To list a few numbers, we see that for the Dubin’s car with acceleration problem space, $85\%$ of the trajectories are within $10\%$ of $d_s$ to the goal; for the tractor trailer problem space, $75\%$ of the trajectories are within $10\%$ of $d_s$ to the goal; and for the quadrotor problem space, $85\%$ of the trajectories are within $10\%$ of $d_s$ to the goal. These results indicate that on average, S3F is able to reach very close to the desired goal.

Measuring the quality of the solutions produced by S3F in terms of time optimality can easily be done by comparing S3F’s trajectory costs with the optimal costs as determined by the NLP solver. Figure \ref{cost_cdf} shows the CDF plots of the ratios of the cost of solutions of trajectories produced by S3F with the cost of solutions of trajectories produced by the NLP solver. An ideal value of the ratio is close to $1$. Results are depicted for $1500$ trajectories. We can see that for all three problem spaces the trajectories are very close to optimal. Specifically, for the Dubin’s car with acceleration problem space, $90\%$ of S3F’s trajectories have costs that are less than $1.25$ times as suboptimal as the optimal cost; for the tractor trailer problem space, $90\%$ of S3F’s trajectories have costs that are less than $1.25$ times as suboptimal as the optimal cost; and for the quadrotor problem space, $80\%$ of S3F’s trajectories have costs that are less than $1.25$ times as suboptimal as the optimal cost. 

\subsection{Planning Comparisons}

\begin{figure}
  \centering
  \begin{subfigure}[t]{3.5in}
    \centering
    \scalebox{0.5}{\input{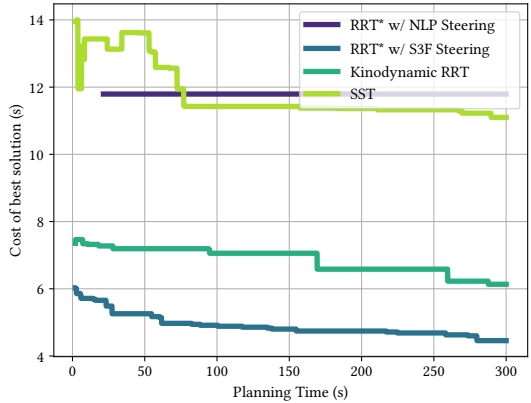}}
    \caption{Dubin's Car with Acceleration average solution cost vs runtime}
    \label{f1tenth_result}
  \end{subfigure}%
  \hspace{0.5cm}
  ~
  \centering
  \begin{subfigure}[t]{3.5in}
    \centering
    \scalebox{0.5}{\input{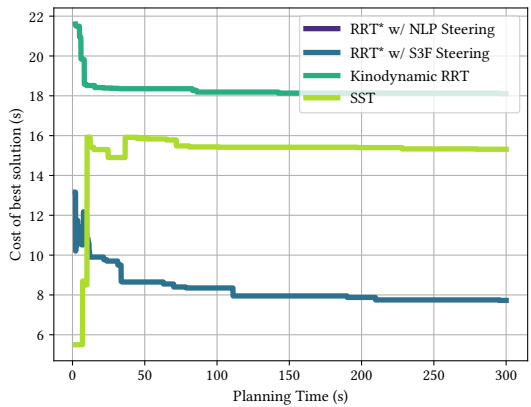}}
    \caption{Tractor Trailer average solution cost vs runtime}
    \label{tractor_trailer_result}
  \end{subfigure}%
  \hspace{0.5cm}
  ~
  \centering
  \begin{subfigure}[t]{3.5in}
    \centering
    \scalebox{0.5}{\input{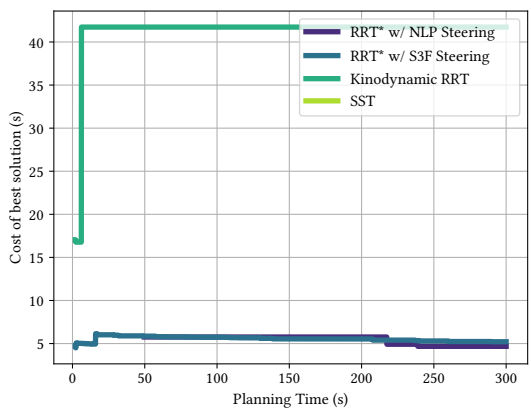}}
    \caption{Quadrotor average solution cost vs runtime}
    \label{quadrotor_result}
  \end{subfigure}

  \caption{Comparison of the planning results of the S3F-RRT*, NLP-RRT*, RRT and SST planning algorithms on the three robot domains. Planning time is plotted against the cost of the best solution found thus far, averaged across 25 planning trials.}
  \label{planning_charts}
  \Description{Comparison of the planning results of the S3F-RRT*, NLP-RRT*, RRT and SST planning algorithms on the three robot domains. Planning time is plotted against the cost of the best solution found thus far, averaged across 25 planning trials.}
\end{figure}



Here we compare planning using the S3F-RRT* algorithm against RRT* with NLP steering, RRT, and SST. By comparing against SST, we can omit a comparison against AO-RRT since previous work~\citep{littlefield2018efficient}\citep{xie2020batch} has shown that empirically SST outperforms AO-RRT. Comparisons are done on all three problem spaces. Starting and ending points for each planning query are sampled randomly across five different maps. 

Figures \ref{f1tenth_result}, \ref{tractor_trailer_result}, and \ref{quadrotor_result} plot the average cost of best solution found by each of the algorithms against wall-clock time for the different robot domains. Results of $25$ planning problems are depicted in each plot. In many cases, it takes the algorithms quite a long time to find their first solution. This causes the graphs to not be monotonically decreasing, since the cost of best solution before a solution is found cannot be plotted. We observe in the graphs that S3F-RRT* is able to find solutions very quickly, and is able to find better solutions than the baseline algorithms irrespective of the amount of computation time given. One of the key reasons why this occurs is that due to the speed of evaluation of the learned steering function, many more RRT* iterations can be completed in a unit time as opposed to NLP-RRT*, enabling the more rapid exploration of the state space by the sampling-based planning algorithm. Furthermore, because S3F does a good job at approximating the optimal steering function, waypoints in the final planned path are connected in a near-optimal fashion. This is something that the baseline algorithms like SST and RRT are unable to do, because in these algorithms waypoints are connected by randomly sampled trajectories, resulting in significant suboptimality. 

\begin{figure}
\begin{tabular}{|l|r|r|r|r|r|r|r| }
  \hline
  & \multicolumn{2}{c|}{Dubin's Car} & 
    \multicolumn{2}{c|}{Tractor Trailer} &
    \multicolumn{2}{c|}{Quadrotor} \\
  & f ($\%$) & t ($s$) & f ($\%$) & t ($s$) & f ($\%$) & t ($s$) \\
  \hline
  RRT & 4 & 0.251 & 0 & 0.083 & 10 & 0.789 \\
  S3F-RRT$^*$ & 20 & 0.480 & 28 & 1.910 & 0 & 16.386 \\
  NLP-RRT$^*$ & 92 & 21.307 & 100 & -- & 70 & 168.656 \\
  SST & 12 & 10.013 & 48 & 11.075 & 100 & -- \\
  \hline
\end{tabular}
\vspace{-3mm}
\caption{Failure rate (f) and time to first solution (t) of different planners}
\label{table}
\vspace{-0.5cm}
\Description{Failure rate (f) and time to first solution (t) of different planners}
\end{figure}

Figure \ref{table} depicts the rate of failure and average time to first solution of the different algorithms. The time to first solution differs from the cost of best solution in Figure \ref{planning_charts} in that the former only considers how long it takes to find the first feasible solution. We can see that across the different problem spaces, S3F-RRT* has lower rates of failure than SST and NLP-RRT*. Figure \ref{quadrotor_result} seems to show that S3F-RRT* and NLP-RRT* have similar performance on the quadrotor domain, but the data in the table shows that S3F-RRT* has a much lower rate of failure and finds its first solution far more quickly, demonstrating that S3F-RRT* indeed has better performance. S3F-RRT* on average is able to find its first solution almost as quickly as RRT. It takes on average an order of magnitude more time for SST and NLP-RRT* to find their first solutions. 


\section{Conclusion}

We introduced State Supervised Steering Function, a learning based approximation of the optimal steering function for complex kinodynamic systems. We demonstrate that the learned steering function can be used in sampling-based planners to achieve superior planning results. This superiority is assessed on metrics of time to find solution and quality of solution for three challenging robot domains. Finally, we present a proof of probabilistic completeness of RRT* using S3F, demonstrating its theoretical soundness.


\balance


\begin{acks}
This work has taken place in the Autonomous Mobile
Robotics Laboratory (AMRL) at UT Austin. AMRL research is supported in part by
NSF (CAREER-2046955, IIS-1954778, SHF-2006404), ARO
(W911NF-19-2-0333,W911NF-21-20217), 
DARPA (HR001120C0031), Amazon, JP Morgan, and Northrop Grumman Mission Systems.
The views and conclusions contained in this document are those of the authors
alone.

\end{acks}

\bibliographystyle{ACM-Reference-Format} 
\bibliography{paper}


\newpage
\section{Supplementary Materials}

\subsection{Robot State/Control Space Bounds}
The following are the bounds of the state and control variables for the Dubin's Car with Acceleration 
robot domain:
\begin{align*}
  x&: [-5, 5] m & y&: [-5, 5] m \\
  \theta&: [0, 2\pi] rad & v&: [-3, 3] \frac{m}{s} \\
  k&: [-1, 1] m^{-1} & a&: [-1, 1] \frac{m}{s^2}
\end{align*}
The following are the bounds of the state and control variables for the Tractor Trailer 
robot domain:
\begin{align*}
  x&: [-5, 5] m & y&: [-5, 5] m \\
  \theta&: [0, 2\pi] rad & v&: [-1, 1] \frac{m}{s} \\
  \alpha&: [0, 2\pi] rad & L&: 0.25 m \\
  D&: 0.5 m & a&: [-1, 1] \frac{m}{s^2} \\
  \phi&: [\tan^{-1}(-L), \tan^{-1}(L)] rad
\end{align*}
The following are the bounds of the state and control variables for the Quadrotor 
robot domain:
\begin{align*}
  x&: [-5, 5] m & y&: [-5, 5] m \\
  z&: [0, 5] m & \dot{x}&: [-3, 3] \frac{m}{s} \\
  \dot{y}&: [-3, 3] \frac{m}{s} & \dot{z}&: [-1, 1] \frac{m}{s} \\
  \theta&: [-\frac{\pi}{2}, \frac{\pi}{2}] rad & \phi&: [-\frac{\pi}{2}, \frac{\pi}{2}] rad \\
  \gamma&: [-\pi, \pi] rad & \dot{\theta}&: [-\pi, \pi] \frac{rad}{s} \\
  \dot{\phi}&: [-\pi, \pi] \frac{rad}{s} & \dot{\gamma}&: [-\frac{\pi}{2}, \frac{\pi}{2}] \frac{rad}{s} \\
  w&: 1.2 kg & L&: 0.3 m \\
  r&: 0.1 m & b&: 0.0245 \\
  \tau_1&: [1.994, 10.095] N & \tau_2&: [1.994, 10.095] N \\
  \tau_3&: [1.994, 10.095] N & \tau_4&: [1.994, 10.095] N
\end{align*}

\subsection{Implementation Details}
All of the experiments were run on a Parallels Desktop virtual machine running Ubuntu ARM64 
on a 2020 M1 Macbook Air. The virtual machine was equipped with $4$ processing cores and $4$ GB RAM. 

For the planning experiments, the S3F-RRT*, NLP-RRT*, and RRT algorithms were implemented in C++ by the 
authors. The Open Motion Planning Library (OMPL) was 
used for the implementation of the SST algorithm. For training dataset generation and in NLP-RRT*, the 
PSOPT optimal control library was used as the NLP solver. 

The policy $\pi$ in S3F was represented as a feedforward neural network. A two hidden layer $256$ 
neuron network with $\tanh$ activations was used for both the Dubin’s car with 
acceleration and tractor trailer problem spaces. A three hidden layer $256$ neuron network with 
the same activations was used for the quadrotor problem space.

\subsection{Probabilistic Completeness Proof}

Here we present a proof of probabilistic completeness (PC) of the S3F-RRT* algorithm. S3F-RRT* is a 
	modification of the original RRT* algorithm designed to make use of a 
	learned steering function. The proof largely follows the structure of the proof of probabilistic
	completeness of geometric RRT, though significant modifications have 
	been made to take into account the presence of kinodynamic constraints and the use of a learned steering 
	function. 

  Let $c^*(x_a, x_b)$ denote the cost of the optimal trajectory from 
  $x_a$ to $x_b$, or equivalently the kinodynamic distance from $x_a$ to $x_b$. We assume that $c^*$ obeys the triangle inequality, that is, $c^*(x_a, x_b) \leq c^*(x_a, x) + c^*(x, x_b)$ for all 
  $x \in X$. Let $\tilde{S}$ be a learned steering function. We assume that with nonzero probability $p$, $\tilde{S}(x_a, x_b)$ yields 
  a state function $\tilde{\Gamma}$ that satisfies $c^*(\tilde{\Gamma}(t), x_b) \leq c^*(x_a, x_b)$ for all $t \in [0, t_f]$. This 
  assumption in essence states that every state along the path produced by $\tilde{S}$ is kinodynamically closer to the goal state than the start state 
  is. For a steering function trained to be optimal, this is a reasonable assumption.    
  
  We will use $B_r(x)$ to denote the subset of the state space $X$ defined by $\{x'|c^*(x', x) \leq r\}$. For simplicity, we assume 
  that there exist $\delta_\mathrm{goal} > 0, x_\mathrm{goal} \in X_\mathrm{goal}$ such that 
  $B_{\delta_\mathrm{goal}}(x_\mathrm{goal}) \subseteq X_\mathrm{goal}$. We denote this simplified goal region $B_{\delta_\mathrm{goal}}(x_\mathrm{goal})$ as $X_\mathrm{goal}^*$. The goal of the motion planning problem is to find a kinodynamically 
  feasible path $\pi : [0, t_\pi] \rightarrow X_\mathrm{free}$ such that $\pi(0) = x_\mathrm{init}$ and $\pi(t_\pi) \in X_\mathrm{goal}^*$. 
  The clearance of $\pi$ is the maximal $\delta_\mathrm{clear}$ such that $B_{\delta_\mathrm{clear}}(\pi(t)) \in X_\mathrm{free}$ for 
  all $t \in [0, t_\pi]$.

  We assume for this proof that there exists a valid trajectory $\pi : [0, t_\pi] \rightarrow 
  X_\mathrm{free}$ with clearance $\delta_\mathrm{clear} > 0$. Without loss of generality, assume 
  that $\pi(t_\pi) = x_\mathrm{goal}$, i.e., the trajectory terminates at the center of the goal 
  region. Let $L$ be the total cost of $\pi$, and let $v = min(\delta_\mathrm{clear}, 
  \delta_\mathrm{goal})$. Let $m = \frac{3L}{v}$. Define a sequence of $m+1$ points $x_0 = 
  x_\mathrm{init}, ..., x_m = x_\mathrm{goal}$ along $\pi$ such that the cost of traversal from one 
  point to the next is $\frac{v}{3}$. Therefore, $c^*(x_i, x_{i+1}) \leq \frac{v}{3}$ for every $0 
  \leq i < m$. We will now prove that as the number of iterations increases, the S3F-RRT* algorithm will generate a 
  path passing through the vicinity of these $m+1$ points with probability asymptotically approaching one. 

  \begin{lemma}
    \label{lemma1}
      Suppose that S3F-RRT* has reached $B_\frac{v}{3}(x_i)$, that is, its tree contains a vertex $x_i'$ such that $x_i' \in B_\frac{v}{3}(x_i)$. If $x_\mathrm{rand} \in B_\frac{v}{3}(x_{i+1})$ and $c^*(x_i, x_\mathrm{rand}) \leq \frac{v}{3}$ (equivalently $x_i \in B_\frac{v}{3}(x_\mathrm{rand})$), then the path from the nearest neighbor $x_\mathrm{near}$ to $x_\mathrm{rand}$ lies entirely in $X_\mathrm{free}$ with probability $p$.
    \end{lemma}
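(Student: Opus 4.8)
The plan is to isolate all of the probabilistic content in the single steering-function assumption and reduce the rest of the claim to a deterministic clearance-tube argument built from two applications of the triangle inequality. Concretely, I would show that \emph{whenever} the learned steering function behaves as assumed (an event of probability $p$), every state $\tilde{\Gamma}(t)$ on the trajectory $\tilde{S}(x_\mathrm{near}, x_\mathrm{rand})$ stays within kinodynamic distance $\delta_\mathrm{clear}$ of the path point $x_{i+1}=\pi(s_{i+1})$, and is therefore collision-free by the clearance hypothesis $B_{\delta_\mathrm{clear}}(\pi(s))\subseteq X_\mathrm{free}$.

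The chain of estimates I would carry out is as follows. First, bound the distance from the existing tree vertex $x_i'$ to the sample: by the triangle inequality with intermediate point $x_i$, $c^*(x_i', x_\mathrm{rand}) \leq c^*(x_i', x_i) + c^*(x_i, x_\mathrm{rand}) \leq \frac{v}{3} + \frac{v}{3} = \frac{2v}{3}$, using $x_i' \in B_\frac{v}{3}(x_i)$ and the hypothesis $c^*(x_i, x_\mathrm{rand}) \leq \frac{v}{3}$. Second, since $x_\mathrm{near}$ is the nearest tree vertex to $x_\mathrm{rand}$ and $x_i'$ is itself in the tree, $c^*(x_\mathrm{near}, x_\mathrm{rand}) \leq c^*(x_i', x_\mathrm{rand}) \leq \frac{2v}{3}$. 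Third, invoke the steering assumption: with probability $p$, the produced $\tilde{\Gamma}$ satisfies $c^*(\tilde{\Gamma}(t), x_\mathrm{rand}) \leq c^*(x_\mathrm{near}, x_\mathrm{rand}) \leq \frac{2v}{3}$ for all $t \in [0, t_f]$. Fourth, apply the triangle inequality once more with intermediate point $x_\mathrm{rand}$: $c^*(\tilde{\Gamma}(t), x_{i+1}) \leq c^*(\tilde{\Gamma}(t), x_\mathrm{rand}) + c^*(x_\mathrm{rand}, x_{i+1}) \leq \frac{2v}{3} + \frac{v}{3} = v$, where $c^*(x_\mathrm{rand}, x_{i+1}) \leq \frac{v}{3}$ comes from $x_\mathrm{rand} \in B_\frac{v}{3}(x_{i+1})$. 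Since $v = \min(\delta_\mathrm{clear}, \delta_\mathrm{goal}) \leq \delta_\mathrm{clear}$, this gives $\tilde{\Gamma}(t) \in B_{\delta_\mathrm{clear}}(x_{i+1}) \subseteq X_\mathrm{free}$ for every $t$, which is exactly the claim.

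The main obstacle I anticipate is bookkeeping around the \emph{asymmetry} of $c^*$: because $c^*(x_a,x_b)$ need not equal $c^*(x_b,x_a)$, I must make sure each triangle-inequality application is instantiated in the exact orientation stated ($c^*(x_a,x_b)\leq c^*(x_a,x)+c^*(x,x_b)$), that the arguments of $B_r(\cdot)$ are read consistently with the definition $B_r(x)=\{x'\mid c^*(x',x)\leq r\}$, and that the ``nearness'' used to select $x_\mathrm{near}$ is measured in the same direction in which the steering function is invoked (cost \emph{to reach} $x_\mathrm{rand}$). The steering-monotonicity assumption is stated precisely as $c^*(\tilde{\Gamma}(t), x_b)\leq c^*(x_a,x_b)$, which lines up with the first term in the fourth step, so the orientations are compatible; verifying this alignment at each step is the only delicate point, and once it is checked the remaining inequalities are routine.
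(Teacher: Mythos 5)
Your proposal is correct and follows essentially the same argument as the paper's own proof: bound $c^*(x_\mathrm{near}, x_\mathrm{rand})$ via the nearest-neighbor property and one triangle inequality through $x_i$, invoke the steering assumption to transfer that bound to every $\tilde{\Gamma}(t)$ with probability $p$, and apply a second triangle inequality through $x_\mathrm{rand}$ to place each $\tilde{\Gamma}(t)$ in $B_v(x_{i+1}) \subseteq X_\mathrm{free}$. The only cosmetic difference is that you compute the intermediate bound $\tfrac{2v}{3}$ explicitly and handle the trajectory states directly, whereas the paper first runs the full chain for $x_\mathrm{near}$ and then remarks that the same logic applies pointwise along $\tilde{\Gamma}$.
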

    
    \begin{proof}
      Because $x_\mathrm{near}$ is the nearest neighbor, it is true that $c^*(x_\mathrm{near}, x_\mathrm{rand}) \leq c^*(x_i', x_\mathrm{rand})$. Invoking the triangle inequality,
      \begin{align*}
        c^*(x_\mathrm{near}, x_{i+1}) &\leq c^*(x_\mathrm{near}, x_\mathrm{rand}) + c^*(x_\mathrm{rand}, x_{i+1}) \\
        &\leq c^*(x_i', x_\mathrm{rand}) + c^*(x_\mathrm{rand}, x_{i+1}) \\
        &\leq c^*(x_i', x_i) + c^*(x_i, x_\mathrm{rand}) + c^*(x_\mathrm{rand}, x_{i+1}) \\
        &\leq 3\frac{v}{3} = v
      \end{align*}
      Thus $x_\mathrm{near} \in B_v(x_{i+1})$, meaning $x_\mathrm{near} \in X_\mathrm{free}$. Assume that $c^*(\tilde{\Gamma}(t), x_b) \leq c^*(x_a, x_b)$. The probability that this occurs is $p$. Since each state along $\tilde{\Gamma}$ is closer or as close to $x_\mathrm{rand}$ as $x_\mathrm{near}$, the same logic that was applied above to $x_\mathrm{near}$ can be applied to each respective state. Thus, with probability $p$, the path from $x_\mathrm{near}$ to $x_\mathrm{rand}$ will lie entirely in $x_\mathrm{free}$. 
    \end{proof}
    
    \begin{theorem}
      The probability that S3F-RRT* fails to reach $X_\mathrm{goal}^*$ from $x_\mathrm{init}$ after $k$ iterations is at most $ae^{-bk}$, for some constants $a, b \in \mathbb{R}_{>0}$. 
    \end{theorem}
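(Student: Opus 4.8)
The plan is to reduce reaching the goal to counting a sequence of independent \emph{advancement} events and then to bound a binomial lower tail. First I would fix notation from the overview: say S3F-RRT* has reached $B_\frac{v}{3}(x_i)$ once its tree contains a vertex there, and let $r_i$ denote the probability that a single iteration adds a vertex to $B_\frac{v}{3}(x_{i+1})$ given that $B_\frac{v}{3}(x_i)$ has already been reached. The first task is to show $r_i$ is bounded below by a positive constant independent of the iteration count $k$. A single iteration advances with at least the probability that (a) the uniform sample $x_\mathrm{rand}$ lands in the region $R_i = B_\frac{v}{3}(x_i) \cap B_\frac{v}{3}(x_{i+1})$, and (b) the learned steering function then yields a collision-free trajectory. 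Because $c^*(x_i, x_{i+1}) \le \frac{v}{3}$ and $c^*$ is continuous, $R_i$ has positive measure, so sampling into it has some fixed probability $\rho_i > 0$; and conditioned on $x_\mathrm{rand} \in R_i$, Lemma~\ref{lemma1} guarantees the steering path lies in $X_\mathrm{free}$ with probability $p$. Hence $r_i \ge \rho_i\, p > 0$, independent of $k$, and I set $r = \min_{0 \le i < m} r_i > 0$.

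The second step is the stochastic-dominance reduction. Since a vertex, once added, is never removed in a way that un-reaches a ball, the balls $B_\frac{v}{3}(x_i)$ are reached monotonically, and on every iteration at which the goal has not yet been reached the tree advances to the next ball with probability at least $r$, \emph{regardless of the current tree}. Consequently the number of advancements accumulated in $k$ iterations stochastically dominates a $\mathrm{Binomial}(k, r)$ variable $Y$, and reaching $X_\mathrm{goal}^*$ requires $m$ advancements, so
\[
  P[\text{fail after } k] \;\le\; P[Y < m] \;=\; \sum_{j=0}^{m-1} \binom{k}{j} r^j (1-r)^{k-j}.
\]

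Finally I would bound this lower tail. For each $j \le m-1$ I use $\binom{k}{j} \le k^{m-1}$ and $(1-r)^{k-j} \le (1-r)^{k-m}$, giving $P[Y < m] \le m\, C\, k^{m-1} (1-r)^{k}$ for a constant $C$ absorbing the $r$- and $m$-dependent factors. Since $k^{m-1}(1-r)^k$ is a polynomial times a geometric term, for any $b$ with $0 < b < -\ln(1-r)$ there is a constant $a$ with $m\, C\, k^{m-1} (1-r)^k \le a e^{-bk}$ for all $k$, yielding the claim. (Alternatively, a Chernoff bound on the lower tail of $Y$, valid once $k > m/r$, gives the exponential decay directly.)

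The main obstacle is the uniform lower bound of the first step: one must argue that the per-iteration advancement probability $r$ can be chosen independent of the growing, random tree, so that the $k$ iterations may legitimately be compared to i.i.d.\ Bernoulli trials. The delicate points are (i) that the nearest-neighbor vertex used by \texttt{Steer} need not be the witness $x_i'$ itself, which is exactly what Lemma~\ref{lemma1} dispatches via the triangle inequality, and (ii) that the positive-measure claim for $R_i$ rests on the regularity of the kinodynamic cost $c^*$; both must be stated carefully to make the dominance argument rigorous.
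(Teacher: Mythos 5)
Your proposal follows the paper's proof almost exactly: a per-step advancement probability $r_i$ bounded below independently of $k$ via Lemma~\ref{lemma1}, a reduction to $m$ successes in $k$ Bernoulli trials with success probability $r=\min_i r_i$, and a polynomial-times-geometric bound on the binomial lower tail $\sum_{j=0}^{m-1}\binom{k}{j}r^j(1-r)^{k-j}$. The tail estimate and the dominance framing are fine.

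There is, however, one concrete gap in your lower bound $r_i \ge \rho_i\,p$. The event you price out --- sample lands in the right region \emph{and} the steered path is collision-free --- does not by itself imply that a vertex is added to $B_{\frac{v}{3}}(x_{i+1})$. Because $\tilde S$ is only approximate, the vertex actually inserted is $x_\mathrm{new}=\texttt{EndState}(x,T)$, which is merely within $r_\mathrm{error}$ of $x_\mathrm{rand}$ and need not lie in $B_{\frac{v}{3}}(x_{i+1})$ even when $x_\mathrm{rand}$ does. The paper's proof inserts a third factor $\kappa_i>0$, the conditional probability that $x_\mathrm{new}\in B_{\frac{v}{3}}(x_{i+1})$ given the sampling and collision-freeness events, and sets $r_i=p\,\gamma_i\,\kappa_i$; your argument needs the same factor (or a restriction of the sampling region so that an $r_\mathrm{error}$-perturbation of $x_\mathrm{rand}$ stays inside the ball). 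A second, smaller imprecision: since $c^*$ is a kinodynamic cost and need not be symmetric, $B_{\frac{v}{3}}(x_i)=\{x': c^*(x',x_i)\le \frac{v}{3}\}$ is not the region required by Lemma~\ref{lemma1}, which asks for $c^*(x_i,x_\mathrm{rand})\le\frac{v}{3}$; the correct region is $\{x: c^*(x_i,x)\le\frac{v}{3}\}\cap B_{\frac{v}{3}}(x_{i+1})$, which the paper shows is nonempty (hence of positive measure) because the points of $\pi$ between $x_i$ and $x_{i+1}$ satisfy both conditions. With these two repairs your argument coincides with the paper's.
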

    
    \begin{proof}
      Assume that $B_\frac{v}{3}(x_i)$ already contains an S3F-RRT* vertex. Let $r_i$ be the probability that in the next iteration a S3F-RRT* vertex will be added to $B_\frac{v}{3}(x_{i+1})$. Recall that due to lemma \ref{lemma1}, $x_\mathrm{rand} \in B_\frac{v}{3}(x_{i+1})$ and $c^*(x_i, x_\mathrm{rand}) \leq \frac{v}{3}$ implies that the path from $x_\mathrm{near}$ to $x_\mathrm{rand}$ will lie entirely in $X_\mathrm{free}$ with probability $p$. In the S3F-RRT* algorithm, after $x_\mathrm{rand}$ is sampled, all states in $X_\mathrm{near}$ are considered as possible parent states. By the definition of $X_\mathrm{near}$, $x_\mathrm{near}$ is a part of this candidate set. Thus, it is guaranteed that $x_\mathrm{new}$ will be added as a S3F-RRT* vertex with probability greater than or equal to $p$. Assume that the probability that both $x_\mathrm{rand} \in B_\frac{v}{3}(x_{i+1})$ and $c^*(x_i, x_\mathrm{rand}) \leq \frac{v}{3}$ is $\gamma_i > 0$. It is safe to assume that this probability is nonzero because any state along the path produced by $S^*(x_i, x_{i+1})$ satisfies these constraints, and so does any state along the portion of $\pi$ from $x_i$ to $x_{i+1}$. Finally, let the conditional probability that $x_\mathrm{new} \in B_\frac{v}{3}(x_{i+1})$ given that $x_\mathrm{rand} \in B_\frac{v}{3}(x_{i+1})$ and $c^*(x_i, x_\mathrm{rand}) \leq \frac{v}{3}$ be $\kappa_i > 0$. It is again safe to assume that this probability is nonzero because $\tilde{\Gamma}$ closely approximates $\Gamma^*$, meaning $x_\mathrm{new}$ will be close to $x_\mathrm{rand}$. Taking into account these probabilities, we have $r_i = p \gamma_i \kappa_i$. Note that this expression is independent of $k$. 
      
      Let $r$ be the minimum of the probabilities $\{r_i | \forall i (0 \leq i < m)\}$. In order for the S3F-RRT* algorithm to reach $X_\mathrm{goal}^*$ from $x_\mathrm{init}$, a S3F-RRT* vertex must be added to $B_\frac{v}{3}(x_{i+1})$ $m$ times for $0 \leq i < m$. This stochastic process can be defined as a Markov chain. Alternatively, this process can be described as $k$ Bernoulli trials with success probability $r$. The planning problem can be solved after $m$ successful outcomes. Note that the success probability $r$ is an underestimate of the true success probability for each trial, and that it is possible that the process ends after less than $m$ successful outcomes. Defining the problem in such a manner allows us to obtain an upper bound on the probability of failure. 
    
      Next, we bound the probabilty of faiure, that is, the probability that the process does not reach state $m$ after $k$ steps. Let $X_k$ denote the number of successes in $k$ trials, then 
      \begin{align*}
        Pr[X_k < m] &= \sum_{i=0}^{m-1}{\binom{k}{i}r^i(1-r)^{k-i}} \\
        &\leq \sum_{i=0}^{m-1}{\binom{k}{m-1}r^i(1-r)^{k-i}} \\
        &\leq \binom{k}{m-1}\sum_{i=0}^{m-1}{(1-r)}^k \\ 
        &\leq \binom{k}{m-1}\sum_{i=0}^{m-1}{(e^{-\tau})^k} \\
        &= \binom{k}{m-1}me^{-rk} \\ 
        &= \frac{\prod_{i=k-m}^{k}{i}}{(k-1)!}me^{-rk} \\ 
        &\leq \frac{m}{(m-1)!}k^me^{-rk} 
      \end{align*}
      where the second statement is justified since $m << k$, the third statement uses the fact that $r < \frac{1}{2}$, and the fourth statement relies on $(1-r) \leq e^{-\tau}$. As $r, m$ are fixed and independent of $k$, the expression $\frac{1}{(m-1)!}k^mme^{-rk}$ decays to zero exponentially with $k$. Therefore, S3F-RRT* is probabilistically complete.
    \end{proof}



\end{document}